\newcommand{\human}{\xi}
\newtheorem{proposition}{Proposition}
\title{\LARGE \bf
HMPCC: Human-Aware Model Predictive Coverage Control
}
\author{Mattia Catellani$^*$,  Marta Gabbi$^*$, Lorenzo Sabattini% <-this % stops a space
\thanks{*These authors contributed equally to this work.}% <-this % stops a space
\thanks{The authors are with Department of Sciences and Methods for Engineering, University of Modena and Reggio Emilia, Italy 
{\tt\small \{mattia.catellani, marta.gabbi, lorenzo.sabattini\}@unimore.it}%
}%
\thanks{This work was supported by the AI-DROW Project, funded by the Italian Ministry of University and Research under the PRIN 2022 program.}
}
\begin{document}

\maketitle
\thispagestyle{empty}
\pagestyle{empty}

%%%%%%%%%%%%%%%%%%%%%%%%%%%%%%%%%%%%%%%%%%%%%%%%%%%%%%%%%%%%%%%%%%%%%%%%%%%%%%%%
\begin{abstract}

We address the problem of coordinating a team of robots to cover an unknown environment while ensuring safe operation and avoiding collisions with non-cooperative agents. Traditional coverage strategies often rely on simplified assumptions, such as known or convex environments and static density functions, and struggle to adapt to real-world scenarios, especially when humans are involved.
In this work, we propose a human-aware coverage framework based on Model Predictive Control (MPC), namely HMPCC, where human motion predictions are integrated into the planning process. By anticipating human trajectories within the MPC horizon, robots can proactively coordinate their actions %avoid redundant exploration, 
and adapt to dynamic conditions.
The environment is modeled as a Gaussian Mixture Model (GMM), representing regions of interest. Team members operate in a fully decentralized manner, without relying on explicit communication—an essential feature in hostile or communication-limited scenarios.
Our results show that human trajectory forecasting enables more efficient and adaptive coverage, improving coordination between human and robotic agents.

\end{abstract}

%%%%%%%%%%%%%%%%%%%%%%%%%%%%%%%%%%%%%%%%%%%%%%%%%%%%%%%%%%%%%%%%%%%%%%%%%%%%%%%%
\section{INTRODUCTION}
% \mc{Intro + related works (coverage e soluzioni per risolvere limited range, ambiente non convesso, modello complesso, ...).

% Coverage with MPC~\cite{carron2020model}, ~\cite{rickenbach2024active}.

% Non-convex coverage: centroid projection~\cite{bhattacharya2013distributed}, Lloyd + TangentBug~\cite{breitenmoser2010voronoi}, review~\cite{cheng2025development}, APFs~\cite{franco2015persistent}.
% }

% A heterogeneous team composed of both robots and humans can be highly effective across a wide range of applications. 
% In particular, tasks such as search and rescue, environmental monitoring, exploration, surveillance, and inspection can benefit from human cooperation, even in hazardous or hostile conditions. 
Tasks such as search and rescue, environmental monitoring, exploration, surveillance, and inspection often occur in hostile and critical scenarios, where teams of robots are deployed in spaces potentially shared with humans. In such scenarios, humans behavior can be unpredictable in both their decision-making and movements, increasing the uncertainty and complexity of the environment. 
Therefore, it is essential to coordinate the motion of robots to account for the presence and behavior of nearby humans while exploring an unknown environment.

% In this work, we address the challenge of deploying a cooperative team to explore an unknown environment and accomplish a shared goal.
%In this work, we address the challenge of deploying a team of robots to explore an unknown environment shared with humans.
% We are considering a heterogeneous team that work cooperatively to explore an unknown environment, with the goal of maximizing coverage and focusing more on areas of greater interest. 

% One of the main solutions to this problem is area coverage.
% Among the traditional strategies, we can find the Voronoi tessellation and Lloyd's algorithm. The former partitions the environment into regions, while the latter is employed to achieve an optimal configuration, where each robot is positioned at the centroid of its Voronoi cell. Furthermore, density functions are often used to represent desired coverage distributions or areas of interest within these methods. However, traditional approaches are limited by assumptions such as convex environments, known density functions, and simplified robot dynamics. One of the key contributions of this work is to address these limitations and extend coverage-based techniques to more realistic and complex scenarios.
A common solution to this coordination problem is area coverage. % A common solution to this challenge is area coverage.
Traditional methods like Voronoi tessellation~\cite{cortes2004coverage} and Lloyd's algorithm~\cite{lloyd1982least} partition the space and position robots at the centroids of their cells, often guided by a density function representing areas of interest. However, these approaches typically assume convex environments, known density distributions, and simple robot dynamics.

One of our key contributions is to overcome these limitations and extend coverage techniques to more realistic and complex scenarios. The challenge intensifies when robots must operate alongside humans. 
% Although several works account for human presence, they often rely on reactive or static models that fail to capture the dynamics of human behavior.
% In contrast, we leverage human trajectory prediction (HTP) to proactively coordinate robot motion, prioritizing safe navigation and ensuring more efficient and adaptive area coverage.
While prior work often treats human presence reactively or with static models, we integrate human trajectory prediction (HTP) to proactively and safely coordinate robot motion, ensuring adaptive and reliable area coverage. 
HTP has been studied for decades and is increasingly relevant in domains such as autonomous driving, social robotics, and surveillance.
According to~\cite{rudenko}, HTP methods are broadly classified into modeling-based and context-based approaches. Modeling-based methods use physical principles and domain knowledge, including crowd models like the Social Force Model (SFM~\cite{SFM}) and velocity-based techniques such as Reciprocal Velocity Obstacles (RVO~\cite{RVO}) and Optimal Reciprocal Collision Avoidance (ORCA~\cite{ORCA}). Context-based methods rely on internal and external stimuli, often using data-driven deep learning models like Long Short-Term Memory networks (e.g., SocialLSTM~\cite{alahi2016social}), Generative Adversarial Networks (e.g., SocialGAN~\cite{gupta2018social}), and Conditional Variational Autoencoders (e.g., Trajectron++~\cite{salzmann2020trajectron++}). FlowChain~\cite{maeda2023fast}, a flow-based model using continuously-indexed flows, is another notable example.
\begin{figure}[t]
    \centering
    \includegraphics[width=0.95\linewidth]{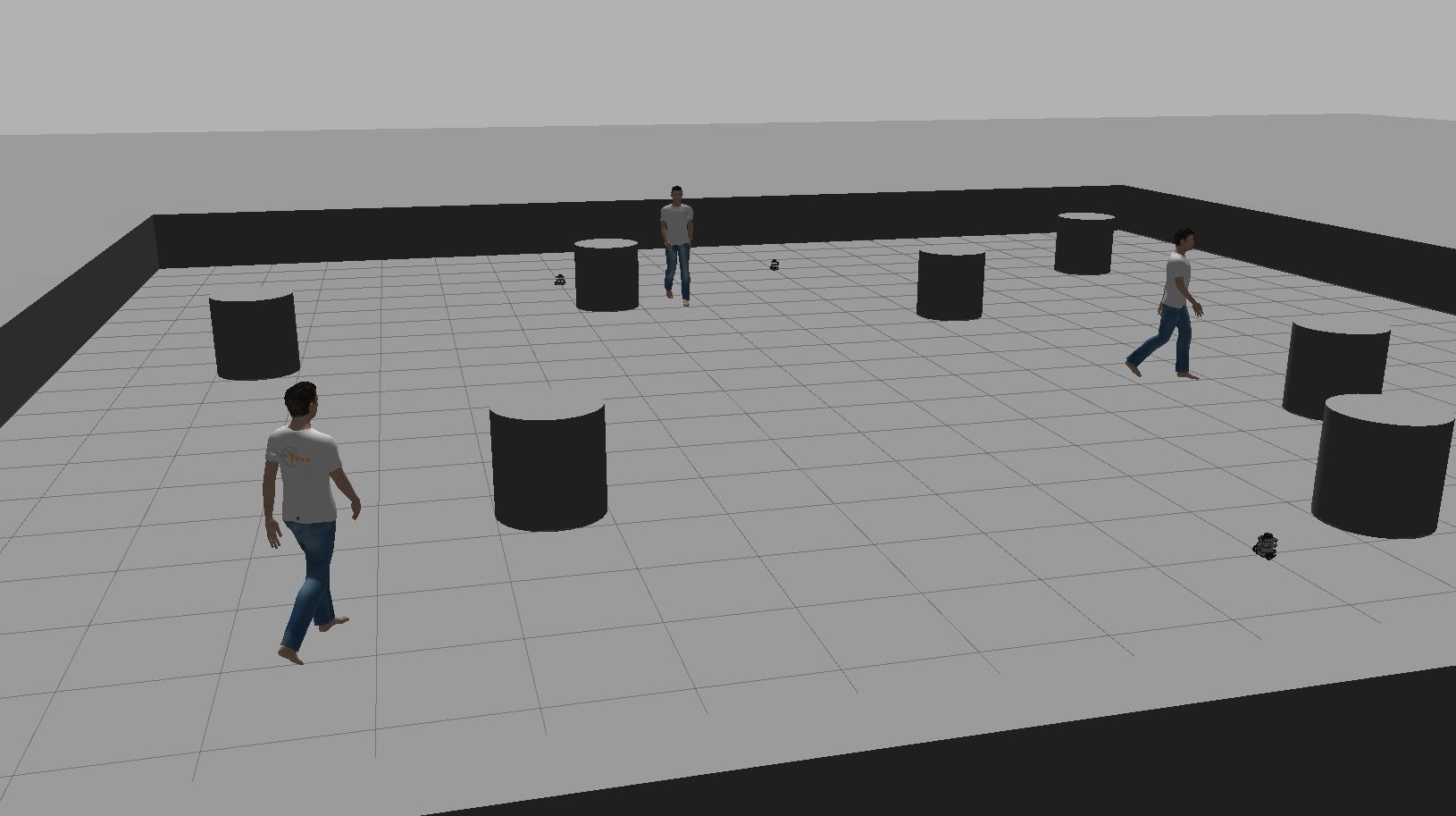}
    \caption{Gazebo simulation with humans and TurtleBot3 robots. HMPCC integrates human motion prediction into trajectory planning, successfully guiding robots with arbitrary dynamics in obstacle-rich environments.}
    \label{fig:gazebo}
\end{figure}
In this work, we address optimal coverage of an unknown environment by a team of robots that share the environment with humans. %a heterogeneous team of robots and humans. 
Exploration is guided by a Gaussian-mixture density map, while all agents operate in a decentralized manner and without communication, reflecting communication-denied or hostile settings. Robot motion is planned with an MPC framework that embeds human trajectory prediction, namely HMPCC. Simulations as in Fig.~\ref{fig:gazebo} prove that this method ensures safe and effective coverage even when human intentions are unknown. 

% This enables robots to anticipate human movements, improve coordination, and avoid redundant exploration of already visited areas.

% This human-aware framework enables effective coordination among team members in critical tasks carried out in challenging and hostile environments. In such scenarios, extensive and adaptive area coverage is essential and cannot be achieved by human efforts alone.

\subsection*{Contributions}
The main contributions of this work are twofold and can be summarized as follows:
\begin{itemize}
    \item an MPC-based framework for coverage control that accounts for generic robot's dynamics and non-convexity of the environment; and 
    % \ls{Non è molto chiaro, dalla review della letteratura, che questo sia un problema degli approcci attuali. Secondo me va enfatizzato meglio. Dalla review della letteratura emerge solo la necessità del secondo punto}
    \item the integration of human trajectory prediction into the MPC formulation to account for human motion when controlling the robots.
\end{itemize}
% The rest of the paper is organized as follows. Sec.~\ref{sec:rel_works} reviews prior work on extending coverage control to realistic environments and incorporating human-aware strategies. Then, Sec.~\ref{sec:prelim} introduces the necessary preliminaries. Sec.~\ref{sec:coverage_mpc} presents the first main contribution, while Sec.~\ref{sec:hmpcc} extends it by incorporating human prediction, forming the second contribution. Sec.~\ref{sec:experiments} evaluates the proposed method against a traditional baseline, and Sec.~\ref{sec:conclusion} concludes with final remarks and future directions.

\section{RELATED WORKS}\label{sec:rel_works}
% \mc{Literature review su coverage in ambienti e modelli non standard $\rightarrow$ human-aware control}\\
% Several studies have extended coverage control to more realistic settings, including non-convex environments and general dynamic models. In \cite{breitenmoser2010voronoi}, the authors propose a decentralized Voronoi coverage in non-convex polygonal environments. The approach relies on a geodesic-based Voronoi tessellation and uses the TangentBug~\cite{kamon1998tangentbug} algorithm to locally plan robot trajectories around obstacles and corners.
% Battacharya et al.~\cite{bhattacharya2013distributed} propose an entropy-weighted Voronoi tessellation. The control law guides the robots towards regions of high entropy within the robot's own tessellation. Potential-based repulsive forces have also been employed to achieve collision-free coverage control~\cite{franco2015persistent}. Instead, dynamics non-linearity has been recently addressed in~\cite{liu2024distributed} for unicycle robots.
% MPC has also emerged as an important technique for managing non-linear dynamics and state and input constraints in coverage tasks. In \cite{carron2020model}, the authors present an MPC-based coverage control algorithm that is proven to converge to a centroidal Voronoi configuration. However, they focus on environments with known density functions. Rickenbach et al.~\cite{rickenbach2024active} extend the MPC for area coverage to the exploration of unknown environments, while also integrating active learning of the density functions.
Several works have extended coverage control to more realistic scenarios, such as non-convex environments and general robot dynamics. In~\cite{breitenmoser2010voronoi}, a decentralized Voronoi coverage approach is proposed for polygonal environments using geodesic-based tessellation and the TangentBug~\cite{kamon1998tangentbug} algorithm for local planning. Battacharya et al.~\cite{bhattacharya2013distributed} introduce an entropy-weighted Voronoi partition, guiding robots toward high-entropy regions. Potential-based repulsive forces have also been used for collision avoidance~\cite{franco2015persistent}. Instead, dynamics non-linearity has been recently addressed in~\cite{liu2024distributed} for unicycle robots. MPC has also emerged as an important technique to handle constraints and non-linear dynamics. In~\cite{carron2020model}, an MPC-based coverage algorithm achieves convergence to a centroidal Voronoi configuration, assuming known density functions. Rickenbach et al.~\cite{rickenbach2024active} further extend MPC to unknown environments by incorporating active density learning.

%Despite these improvements over traditional methods, explicitly integrating humans into the coverage team remains a key challenge that is still largely unexplored. 
Despite these improvements over traditional methods, explicitly accounting for humans in the coverage problem remains a key challenge that is still largely unexplored. Unlike robots, whose behavior can be governed by control policies, humans are autonomous agents who make independent decisions. This makes their behavior significantly more difficult to model, highlighting the need to consider human presence as part of the coverage problem. 

In the literature, many works have investigated how to account for human presence in coverage and navigation tasks.
In \cite{breitenmoser2016combining}, the authors studied different collision scenarios, including a heterogeneous crowd of both robots and humans, who are modeled with a holonomic kinematic model and are distributed according to the Voronoi coverage controller. 
Claes et al. \cite{Claes2018} addressed the problem of collision avoidance in a shared space by employing various cost maps and Monte Carlo sampling, incorporating different cost factors to account for both humans and robots.
The task of multi-robot coordination in environments shared with humans is tackled by Talebpour et al. \cite{Talebpour}. Humans are modeled as dynamic obstacles with orientation-based Gaussian cost maps representing their personal space. These social costs are used in bid evaluations, and robots actively request team collaboration when humans block their paths.

\section{PRELIMINARIES}\label{sec:prelim}
\subsection{Notation and Definitions}
We denote by $\mathbb{N}$, $\mathbb{R}$, $\mathbb{R}_{\geq 0}$, and $\mathbb{R}_{>0}$ the set of natural, real, real non-negative, and real positive numbers, respectively. Given ${x} \in \mathbb{R}^n$, let $\|{x}\|$ be the Euclidean norm. In the rest of the paper, we will consider a generic polygon $Q \subset \mathbb{R}^2$ as the environment, and ${q} \in Q$ will indicate the position of an arbitrary point in $Q$. We consider a team of $N_r \in \mathbb{N}$ robots, and we denote the set of their positions as $\mathcal{P} = \{{p}_1, \dots, {p}_{N_r}  \} \subset Q$. Taking $R \in \mathbb{R}_{>0}$ as the sensing range of the robots, we denote the set of neighbors for ${p}_i$ as $\mathcal{N}_i = \{ {p}_j \in \mathcal{P} \mid \| {p}_j - {p}_i \| \leq R \}$, for $i=1, \dots, N_r$ and $i \neq j$. We also consider $N_h \in \mathbb{N}$ humans operating in the same environment, whose positions are denoted as ${\human}_i \in Q$ for $i=1, \cdots, N_h$, and $N_o \in \mathbb{N}$ obstacles in positions ${\varrho}_j \in Q$ for $j = 1, \dots, N_o$.
Finally, we use $B({p}_i, r) = \{{q} \in Q \mid \| {q}-{p}_i \| \leq r  \}$ to indicate the closed ball centered in ${p}_i$ with radius $r \in \mathbb{R}_{>0}$.

\subsection{Background}
We will now briefly describe the solution presented in~\cite{pratissoli2022coverage} for coverage control with a limited-range robotic team obeying single-integrator dynamics, $\dot{{p}} = {u}$.\\
This control solution aims to maximize a performance function specifically designed to encode the coverage quality of robots over $Q$. Extending the solution in~\cite{cortes2004coverage}, it is possible to find a limited-range Voronoi partitioning of the environment $\mathcal{V}^r(\mathcal{P})=\{V_1^r, \dots, V_N^r \}$ for robots with sensing range $R$ with:
\begin{equation}\label{eq:voronoi_limited}
    V_i^r({p}_i) = \{{q} \in B({p}_i, r) \mid \| {q} - {p}_i \| \leq \| {q} - {p}_j \|,  \forall {p}_j \in \mathcal{N}_i  \}
\end{equation}
where $r = R/2$. The optimization function $\mathcal{H}^r: Q \rightarrow \mathbb{R}$ is defined as:
\begin{equation}\label{eq:coverage_func}
    \mathcal{H}^r(\mathcal{P})= -\sum_{i=1}^{N_r} \int_{V_i^r} f^r(\|{q}-{p}_i \|) \phi({q})d{q}
\end{equation}
where $\phi: Q \rightarrow \mathbb{R}_{\geq 0}$ is a likelihood function highlighting areas of the environment with greater relevance, and 
\begin{equation}
    f^r(x) = \text{min}(x^2, r^2).
\end{equation}
Following the discussion in~\cite{pratissoli2022coverage}, it is possible to define the control input $u_i$ for each robot that maximizes~\eqref{eq:coverage_func} as:
\begin{equation}\label{eq:input_std}
    {u}_i = k({C}_{V_i^r} - {p}_i),
\end{equation}
where $k \in \mathbb{R}_{>0}$ is a proportional gain, and ${C}_{V_i^r} \in \mathbb{R}^2$ is the centroid of $V_i^r$. Applying Lloyd's algorithm~\cite{lloyd1982least}, the partitioning $\mathcal{V}^r(\mathcal{P})$ is continuously updated until the multi-robot system reaches the so-called \textit{centroidal} Voronoi partitioning, corresponding to all the robots placed on the centroid of their Voronoi cell.  

% \subsection{Human Trajectory Prediction}
% \mc{Metodi di predizione (distribuzioni di) traiettorie.}

\subsection{Problem Statement}
We assume robots obey the following motion law: 
\begin{equation}\label{eq:dynamics}
    \dot{x}_i = f(x_i)+g(x_i)u_i
\end{equation}
where $x_i \in \mathbb{R}^n$ is the state of the $i$-th robot, which may include position, velocity, orientation, or a combination of them, $u_i \in U \subset \mathbb{R}^m$ is its control input, $U$ indicates the set of admissible inputs, and $f : \mathbb{R}^n \rightarrow \mathbb{R}^n$ and $g: \mathbb{R}^m \rightarrow \mathbb{R}^{n\times m}$ are Lipschitz continuous functions. 

Robots operate in a 2D environment populated with $N_o \in \mathbb{N}$ obstacles and $N_h \in \mathbb{N}$ humans. The likelihood of events of interest occurring in the environment is modeled by the function $\phi$ as a Gaussian Mixture Model. We assume that a prediction model is available to estimate the future trajectory of each human over a discrete time horizon $T \in \mathbb{N}$, along with the associated uncertainty. Specifically, at the current time $t = t_0$, the predicted position of the $i$-th human at future time step $t_0 + k$ is modeled as a Gaussian distribution:
\begin{equation}\label{eq:human_pred}
    \human_i(t_0 + k \mid t_0) = \mathcal{N}(\mu_i^k, \Sigma_i^k),
\end{equation}
where $k=0, \dots, T-1$ denotes the time index along the prediction horizon. The mean $\mu_i^k \in \mathbb{R}^2$ and the covariance matrix $\Sigma_i^k \in \mathbb{R}^{2\times 2}$ represent the predicted position and the associated uncertainty of the $i$-th human $k$ steps into the future, respectively.

\section{MPC FOR COVERAGE CONTROL}\label{sec:coverage_mpc}
In this section, we present and discuss the proposed optimization-based framework for decentralized coverage control. Specifically, our goal is to use MPC to minimize a specifically designed cost function while satisfying safety and dynamics constraints.
\subsection{Cost Function}
We define the cost function in our optimization problem as the sum of two contributions, $\mathcal{J}_i = \mathcal{J}_i^{\mathrm{cov}} + \mathcal{J}_i^{\mathrm{u}}$. The first replicates the idea of coverage control, and the second encourages smooth control inputs.
\subsubsection{Coverage Cost}
First, we define a suitable coverage cost function that can be minimized in a decentralized manner by each agent. Similarly to the definition of $\mathcal{H}^r$ in~\eqref{eq:coverage_func}, we define the cost function $\mathcal{J}^{\mathrm{cov}}_i: Q \rightarrow \mathbb{R}$ for the $i$-th robot as:
\begin{equation}\label{eq:coverage_cost}
    \mathcal{J}^{\mathrm{cov}}_i = \sum_{k=0}^{T-1} \mathcal{J}^{\mathrm{cov}}_i(k) = \sum_{k=0}^{T-1}\int_{V_i^r(t_0)} \|q-p_i^k \|^2 \phi(q) dq
\end{equation}
where $p_i^k := p_i(t_0+k)$.
% Notably, the integral in the cost function is always calculated over the limited Voronoi cell computed at time $t=t_0$, namely $V_i^r(t_0)$, for all the steps along the prediction horizon. Without this approximation, the integration domain would change along the horizon, making the optimization problem nonlinear and hard to solve online.
It is important to note that the integral in the cost function is consistently evaluated over the limited Voronoi cell computed at the initial time step, $t = t_0$, denoted as $V_i^r(t_0)$, throughout the entire prediction horizon. This approximation is necessary to maintain a fixed integration domain. Without this approximation, the integration domain would change along the horizon, resulting in a nonlinear optimization problem that would be more challenging to solve in real time. It is easy to show that minimizing $\mathcal{J}_i^{\mathrm{cov}}$ returns a set of inputs $\mathcal{U}_i = [ u_i^0, \dots, u_i^{T-1}]$ where $u_i^0$ is defined as in~\eqref{eq:input_std} for single integrator dynamics.
\begin{proposition}
    Consider the $i$-th robot moving in an obstacle-free, convex environment obeying a single integrator dynamics law, $\dot{p}_i = u_i$. The first input $u_i^0$ of the sequence $\mathcal{U}_i$ minimizing~\eqref{eq:coverage_cost} corresponds to~\eqref{eq:input_std}.
\end{proposition}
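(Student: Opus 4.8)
The plan is to exploit the crucial modeling choice that the integration domain $V_i^r(t_0)$ is held fixed over the entire horizon, so that the coverage cost~\eqref{eq:coverage_cost} splits into a sum of terms each depending on a single predicted position $p_i^k$. First I would write the discretized single-integrator update $p_i^{k+1} = p_i^k + \Delta t\, u_i^k$, with $p_i^0 = p_i(t_0)$ fixed as the initial condition. Because the environment is obstacle-free and convex and the input is unconstrained, the positions $p_i^1, \dots, p_i^{T-1}$ are freely reachable through the inputs, so minimizing the sum over the input sequence $\mathcal{U}_i$ separates into independent per-step minimizations of each summand $\int_{V_i^r(t_0)} \|q - p_i^k\|^2 \phi(q)\,dq$ over $p_i^k$ for $k \geq 1$.

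Next I would carry out the unconstrained minimization of a single term. Differentiating with respect to $p_i^k$ yields the stationarity condition
\[
    \nabla_{p_i^k}\, \mathcal{J}^{\mathrm{cov}}_i(k) = -2 \int_{V_i^r(t_0)} (q - p_i^k)\, \phi(q)\, dq = 0,
\]
whose solution is the $\phi$-weighted centroid $p_i^k = C_{V_i^r}$. Since each summand is a strictly convex quadratic in $p_i^k$ — its Hessian being $2\left(\int_{V_i^r(t_0)} \phi(q)\,dq\right) I_2$, positive definite whenever $\phi$ is not identically zero on the cell — this stationary point is the unique global minimizer, so the optimal sequence satisfies $p_i^k = C_{V_i^r}$ for every $k \geq 1$.

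Finally I would translate the optimal first position back into a control input. Imposing $p_i^1 = C_{V_i^r}$ in the dynamics gives $u_i^0 = \tfrac{1}{\Delta t}(C_{V_i^r} - p_i^0)$, which is precisely of the form~\eqref{eq:input_std}: the first input is directed from the current position toward the centroid of the Voronoi cell, with the proportional gain $k$ fixed by the discretization step (the subsequent inputs $u_i^k = 0$ simply keep the robot at the centroid). This recovers the classical Lloyd/coverage controller as a special case of the MPC optimum.

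The main obstacle I anticipate is rigorously justifying the decoupling step, namely that the joint minimization over $\mathcal{U}_i$ genuinely factorizes into independent single-step problems. This relies on the unconstrained (or sufficiently large) admissible-input assumption together with the convex, obstacle-free geometry that guarantees the centroid lies in the reachable free space; once input bounds or a nonconvex domain are reintroduced, the separation fails and the first input would only approximate~\eqref{eq:input_std} rather than equal it. I would therefore state the idealized hypotheses explicitly and flag exactly where each is used.
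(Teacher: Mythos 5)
Your proof is correct, but it takes a genuinely different route from the paper's. The paper follows~\cite[Theorem 1]{pratissoli2022coverage}: it computes the gradient $\partial \mathcal{J}_i^{\mathrm{cov}}/\partial p_i = -2M_{V_i^r}(C_{V_i^r}-p_i)$, with $M_{V_i^r}=\int_{V_i^r}\phi(q)\,dq$ the cell mass, and then shows that applying~\eqref{eq:input_std} yields $\dot{\mathcal{J}}_i^{\mathrm{cov}} = -2kM_{V_i^r}\|C_{V_i^r}-p_i\|^2 \leq 0$; that is, a continuous-time, Lyapunov-style argument establishing that~\eqref{eq:input_std} is (proportional to) the steepest-descent direction of the coverage cost. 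You instead solve the finite-horizon problem exactly: the fixed integration domain $V_i^r(t_0)$ makes the cost separable across prediction steps, the triangular map from inputs to positions is invertible for the discretized unconstrained single integrator, and each summand is a strictly convex quadratic with unique minimizer $C_{V_i^r}$, so the optimal trajectory jumps to the centroid and $u_i^0 = \tfrac{1}{\Delta t}(C_{V_i^r}-p_i^0)$, which is~\eqref{eq:input_std} with gain $k = 1/\Delta t$. Your argument is arguably the more faithful proof of the proposition as stated, since it characterizes the first input of the minimizing sequence rather than only proving that~\eqref{eq:input_std} decreases the cost, and it makes explicit which hypotheses are load-bearing (unconstrained inputs, convexity so the centroid is feasible, and positive mass $M_{V_i^r}>0$ for strict convexity and uniqueness); the paper's argument buys consistency with the Lloyd-type continuous-time convergence results it builds on, at the price of proving only descent. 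Two cosmetic points in your write-up, neither affecting the conclusion: the $k=0$ summand is a constant (it depends only on the fixed $p_i^0$), and the last input $u_i^{T-1}$ never enters~\eqref{eq:coverage_cost}, so it is left undetermined by the coverage cost alone rather than forced to zero.
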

\begin{proof}
    Following the proof in~\cite[Theorem 1]{pratissoli2022coverage}, we show that~\eqref{eq:input_std} implements a gradient descent of the cost function~\eqref{eq:coverage_cost}. The partial derivatives of the cost function can be computed as follows:
    \begin{align}
        \frac{\partial \mathcal{J}_i^{\mathrm{cov}}}{\partial p_i} &= -2 \int_{V_i^r} (q - p_i) \phi(q)dq \nonumber\\
        ~&= -2M_{V_i^r} (C_{V_i^r} - p_i)
    \end{align}
    where $M_{V_i^r} = \int_{V_i^r} \phi(q)dq$ is the mass of the limited Voronoi cell $V_i^r$.
    Therefore, applying~\eqref{eq:input_std} to the time derivative of the cost function we get:
    \begin{align}
        \dot{\mathcal{J}}_i^{\mathrm{cov}} &= \frac{\partial \mathcal{J}_i^{\mathrm{cov}}}{\partial p_i}\dot{p}_i \nonumber\\ 
        ~&=-[2M_{V_i^r} (C_{V_i^r} - p_i)]^{\top} k(C_{V_i^r} - p_i) \leq 0.
    \end{align}
    Given the quadratic form and the negative sign in front of the expression, the time derivative is non-positive, proving the statement.
\end{proof}
Based on the above considerations, the traditional approach can be regarded as a special case of the proposed method, specifically when the environment is convex and the agent dynamics are modeled as single integrators.

\subsubsection{Control Input Cost}
Given a positive definite input cost matrix $R \in \mathbb{R}^{m\times m}$, we define the second term of the cost function, namely $\mathcal{J}^{\mathrm{u}}_i : \mathbb{R}^m \rightarrow \mathbb{R}$, as:
\begin{equation}\label{eq:ctrl_cost}
\mathcal{J}^{\mathrm{u}}_i = u_i^\top Ru_i .
\end{equation}
This additional term penalizes aggressive maneuvers and energy consumption, making smooth trajectories preferable.

The idea behind the design of $\mathcal{J}_i$ is to maintain the convergence properties of $\mathcal{H}^r$, extensively discussed in the literature, while extending its applicability to more generic motion models, instead of the simplistic single integrator usually adopted in traditional approaches~\cite{cortes2004coverage, pratissoli2022coverage}. 

\subsection{Constraints}
The main advantage of using MPC over a traditional closed-form solution lies in the possibility to encode constraints into the optimization problem. 
\subsubsection{Motion Dynamics}
We want the solution of the optimization problem to follow the dynamic equation~\eqref{eq:dynamics}, which can be written in a discretized form as:
\begin{equation}
    x_i^{k+1} = f\big(x_i^k\big) + g(x_i^k)u_i^k \Delta t, \quad
    \forall k \in \{0, \dots, T-1 \} 
\end{equation}

\subsubsection{Obstacle avoidance}
Obstacle avoidance is not directly addressed in traditional approaches. Usually, a downstream solution is adopted that modifies the control input from coverage control to enhance safety. Instead, we encode safety constraints into the optimization problem to obtain an optimal solution directly applicable to the robot. Given a safety distance $D_s \in \mathbb{R}_{> 0}$, we define the constraint for each obstacle $\varrho_j$ as:
\begin{align}\label{eq:safety}
    \|p_i^k - \varrho_j \|^2 - D_s^2 \geq 0, \quad& \forall j \in \{1, \dots, N_o\} \nonumber \\
    ~& \forall k \in \{0, \dots, T-1 \}
\end{align}

\subsection{MPC Optimization Problem}
Combining all components, we formulate a quadratic optimization problem that yields a sequence of control inputs $\mathcal{U}_i = [u_i^0, \dots, u_i^{T-1}]$ for the $i$-th robot. In accordance with the standard MPC framework, only the first input of the optimized sequence $u_0$ is applied at each time step, and the optimization is repeated at the next iteration. 
\begin{subequations} \label{eq:mpc}
\begin{align}
    \min_{\mathcal{U}_i} ~& \mathcal{J}_i^{\mathrm{cov}} + \mathcal{J}_i^{\mathrm{u}}\label{eq:mpc_cost} \\
    \text{s.t.}~& x_i^{k+1} = f\big(x_i^k\big) + g(x_i^k)u_i^k \Delta t~\label{eq:dynamic_constr} \\
    ~& \|p_i^k - \varrho_j \|^2 - D_s^2 \geq 0 \label{eq:obs_constr} \\
    ~& x_i^k \in \mathcal{X} \label{eq:space_constr} \\
    ~& u_{\mathrm{min}} \preceq u_i^k \preceq u_{\mathrm{max}} \label{eq:input_constr1}
\end{align}
\end{subequations}
In the Quadratic Programming (QP) problem above, the state $x_i$ is constrained to lie within the admissible set $\mathcal{X}$~\eqref{eq:space_constr}, and the control input is subject to the bounds defined in~\eqref{eq:input_constr1}, where the operator $\preceq$ denotes element-wise inequality.

\section{HUMAN-AWARE COVERAGE CONTROL}\label{sec:hmpcc}
Introducing humans into the coordination problem significantly increases complexity. Unlike robots, humans do not share their motion plans, and their behavior is often unpredictable and partially observable. This uncertainty makes it difficult to guarantee safety and maintain performance. To address this, the control framework must incorporate models that predict or bound human motion and adjust robot behavior accordingly. This section describes how we extend the baseline controller to account for the presence of humans in the environment. \\
Specifically, our goal is to coordinate robots to perform coverage control and avoid humans during their motion. Assuming access to a probabilistic predictor for human trajectories (e.g.,~\cite{maeda2023fast, gupta2018social, bae2024singulartrajectory}), each robot can construct a probabilistic map over the future positions of nearby humans, represented as Gaussian distributions characterized by a mean $\mu_i^k$ and covariance matrix $\Sigma_i^k$ for the $k$-th prediction step. The predictor takes as input the past $M \in \mathbb{N}$ steps of the human trajectory and outputs a probability density function over future positions for $T$ steps ahead, where $T$ matches the prediction horizon of the MPC. Given the probabilistic nature of the problem, we design a probabilistic collision avoidance mechanism with chance constraints. Briefly, our goal is to design a constraint of the form
\begin{equation}\label{eq:chance_constr}
    P \big( \| \human_j - p_i \| < D_s \big) \leq \alpha,
\end{equation}
meaning that the probability of the distance between the
$j$-th human and the $i$-th robot falling below the safety threshold $D_s$ must remain below the risk level $\alpha \in [0, 1)$. Starting from the notion of squared Mahalanobis distance $d_M$, defined as:
\begin{equation}\label{eq:mahal_dist}
    d_M^2 = (p_i - \mu_j)^{\top} \Sigma_j^{-1}(p_i - \mu_j)
\end{equation}
evaluating the distance from a point to a probabilistic distribution characterized by mean $\mu_j$ and covariance matrix $\Sigma_j$, we can define our chance constraint for human avoidance as follows (see~\cite{du2011probabilistic}):
\begin{equation}\label{eq:human_constr}
    d_M^2 \geq -2 \ln \Big( \sqrt{\det(2\pi\Sigma_j)} \frac{\alpha}{\pi D_s^2} \Big).
\end{equation}
Furthermore, to account for the growing uncertainty in human motion over time, we introduce a slack variable $\delta^k \in \mathbb{R}_{\geq 0}$  that relaxes the constraint at each prediction step $k$. This allows the constraint to become progressively less conservative further along the horizon. The modified constraint at step $k$ is given by
\begin{equation}\label{eq:human_constr_slack}
    (d_M^k)^2 \geq -2 \ln \Big( \sqrt{\det(2\pi\Sigma_j^k)} \frac{\alpha}{\pi D_s^2} \Big) - \delta^k.    
\end{equation}
Progressive relaxation is achieved by defining an additional cost term $\mathcal{J}_\delta : \mathbb{R} \rightarrow \mathbb{R}$ as:
\begin{equation}
    \mathcal{J}_\delta = \sum_{k=0}^{T-1} w^k \delta^k
\end{equation}
where $w^k \in \mathbb{R}_{>0}$ is a linearly decaying weighting factor defined as $w^k = \Omega \big(1- \frac{k}{T}\big)$, where $\Omega \in \mathbb{R}$ is the initial value.

Finally, the overall optimization problem for human-aware coverage control can be written, adapting~\eqref{eq:mpc}, as:
\begin{subequations} \label{eq:mpc_hr}
\begin{align}
    \min_{\mathcal{U}_i} ~& \mathcal{J}_i^{\mathrm{cov}} + \mathcal{J}_i^{\mathrm{u}} + \mathcal{J}_i^\delta \label{eq:mpc_cost_hr} \\
    \text{s.t.}~& x_i^{k+1} = f\big(x_i^k\big) + g(x_i^k)u_i^k \Delta t~\label{eq:dynamic_constr_hr} \\
    ~& \|p_i^k - \varrho_j \|^2 - D_s^2 \geq 0 \label{eq:obs_constr_hr} \\
    ~& x_i^k \in \mathcal{X} \label{eq:space_constr_hr} \\
    ~& u_{\mathrm{min}} \preceq u_i^k \preceq u_{\mathrm{max}} \label{eq:input_constr} \\
    ~& (d_M^k)^2 \geq -2 \ln \Big( \sqrt{\det(2\pi\Sigma_j^k)} \frac{\alpha}{\pi D_s^2} \Big) - \delta^k. \label{eq:human_cost_hr}
\end{align}
\end{subequations}

\section{EXPERIMENTAL EVALUATION}\label{sec:experiments}
This section evaluates our solution's performance against standard approaches, highlighting the benefits of incorporating trajectory predictions. We will run simulations in three different settings, specifically: \emph{(A)} a convex environment without humans, \emph{(B)} a non-convex environment without humans, and \emph{(C)} a non-convex environment with humans. 
% All experiments are conducted with $N = 6$ robots, randomly initialized in a $10\times 10~\si{m^2}$ environment that possibly contains randomly placed obstacles.
Robots operate in a fully decentralized manner without inter-agent communication. The likelihood function $\phi$ is defined as a Gaussian Mixture Model with random components. The prediction horizon is set to $T = 10$. We employ different dynamic models across experiments to evaluate the adaptability of our solution. Results are evaluated over the following performance metrics:
\begin{itemize}
    \item \textit{Coverage Optimization Function} $\mathcal{H}:Q\rightarrow \mathbb{R}$, corresponding to the range-unlimited version of~\eqref{eq:coverage_func}, indicating how close the team is to the \textit{centroidal Voronoi tessellation}:
    \begin{equation}\label{eq:coverage_func_unlim}
    \mathcal{H}(\mathcal{P})= -\sum_{i=1}^{N_r} \int_{V_i} \|q-p_i \|^2 \phi(q)dq.
    \end{equation}
    \item \textit{Coverage Effectiveness} $\mathcal{E} : Q \rightarrow \mathbb{R}$, assessing how well the likelihood density $\phi(q)$ is covered by the limited-range capabilities of the robots:
    \begin{equation}\label{eq:cov_effect}
        \mathcal{E} = \frac{\sum_{i=1}^N \int_{V_i}\phi(q)dq}{\int_Q \phi(q)dq}.
    \end{equation}
\end{itemize}
We compare our method against a hybrid approach that combines limited-range coverage control~\cite{pratissoli2022coverage} with potential-based repulsion for collision avoidance~\cite{franco2015persistent}. 

\subsection{Convex Environment without Humans}
\begin{figure}[t]
    \centering
    \subfigure[Coverage Function]{\includegraphics[width=0.45\linewidth]{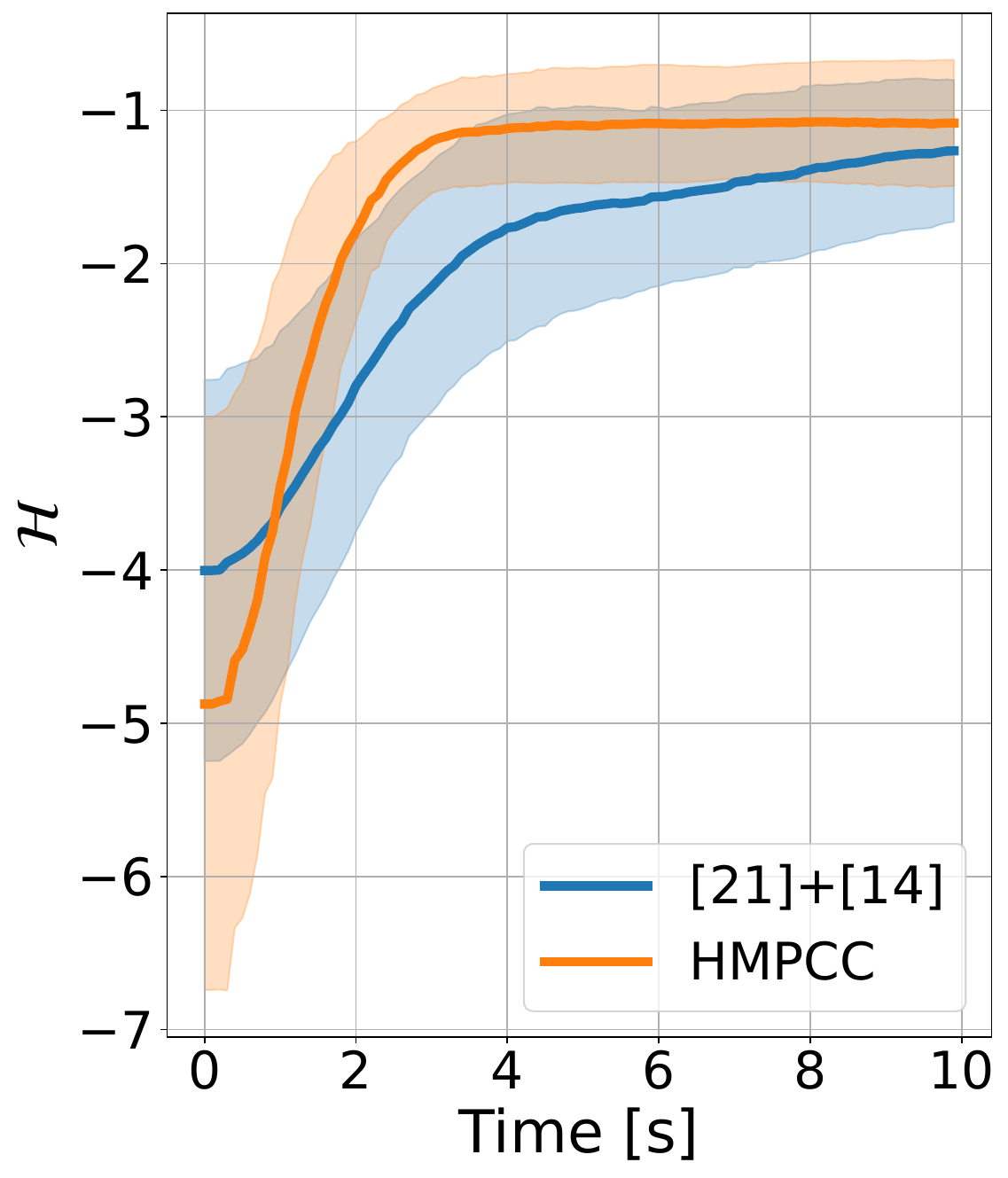}}\quad
    \subfigure[Coverage Effectiveness]{\includegraphics[width=0.45\linewidth]{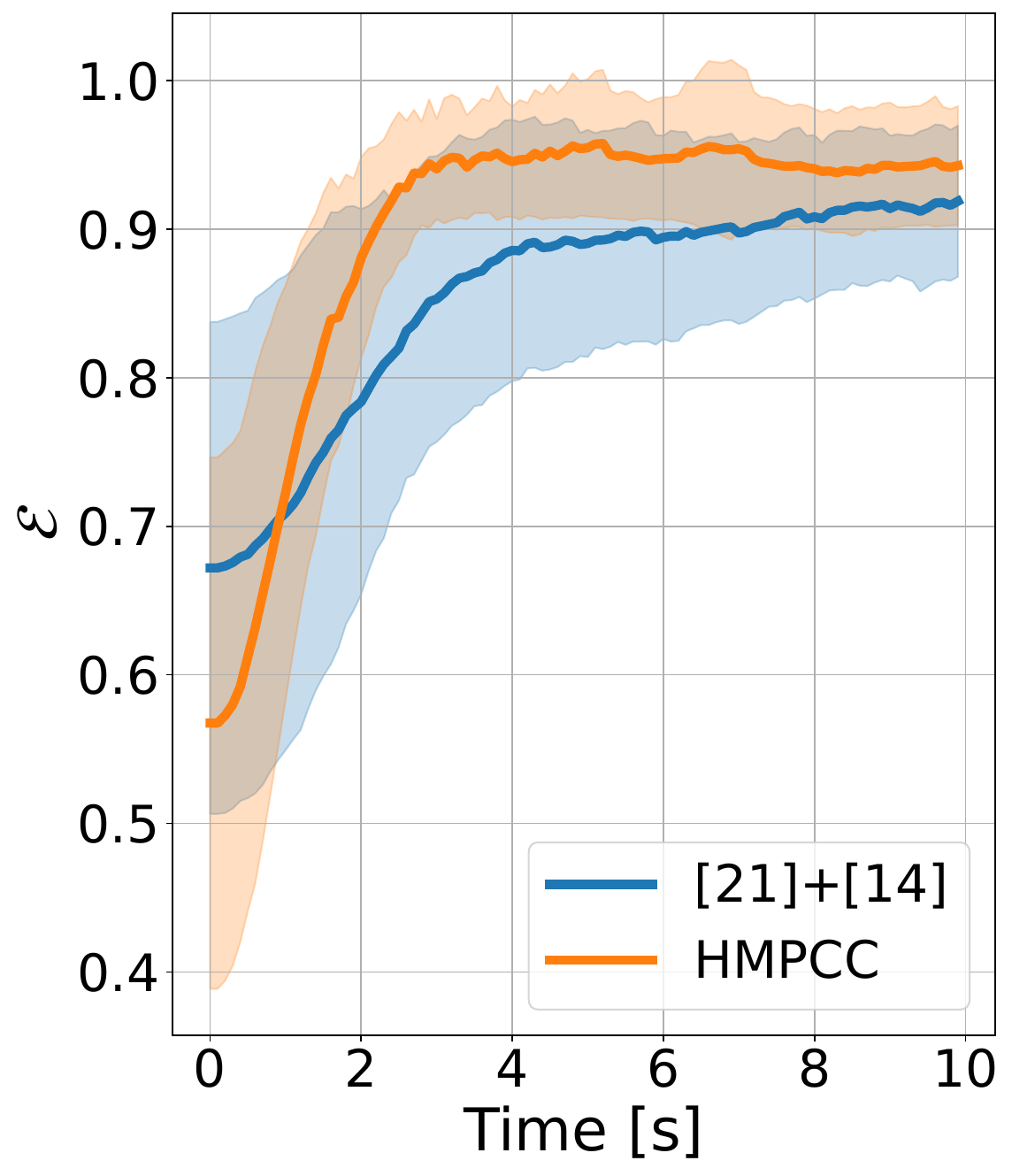}}
    \caption{Convergence rate: comparison between limited-range coverage control~\cite{pratissoli2022coverage} with repulsive collision avoidance~\cite{franco2015persistent} and our HMPCC. Our approach achieves faster convergence for both metrics,  
    % \mc{CHECK CHE RIMANGANO 13 e 20.}
    }
    \label{fig:conv_rate}
\end{figure}
% The first simulations set was run in simplistic scenarios usually assumed by traditional approaches, with no obstacles in the environment and single integrator dynamics $\dot p=u$. The values of $\mathcal{H}$ and $\mathcal{E}$ were recorded at each time-step over $10$ simulation runs, and results are reported in the form of average value and standard deviation in Fig.~\ref{fig:conv_rate}. As can be seen from the plots, our HMPCC achieves faster convergence speed for both the coverage optimization function $\mathcal{H}$ and the effectiveness $\mathcal{E}$. The intuition behind this improved efficiency lies in the control input not being directly related to the distance from the centroid. Planning to minimize the cost function~\eqref{eq:coverage_cost} makes the robots move faster towards areas with higher density, achieving faster convergence.
The first set of simulations was conducted in simplified scenarios commonly assumed by traditional methods, featuring an obstacle-free environment. The experiments are conducted with $N = 6$ robots, randomly initialized in a $10\times 10~\si{m^2}$ environment, obeying a double integrator dynamic law:
\begin{equation}
    {x}_i(t+1) = A{x_i}(t) + Bu_i(t)\Delta t
\end{equation}
where the state of the $i$-th robot ${x_i} = [{p_i}; \dot{{p}_i}]$ contains its position and velocity, and $A = [{0}, {I}; {0}, {0}] \in \mathbb{R}^{4\times 4}$, $B = [{0}; {I}] \in \mathbb{R}^{4\times 2}$. 
Here ${0} \in \mathbb{R}^{2\times 2}$, ${I} \in \mathbb{R}^{2\times 2}$ are the zero and identity matrix, respectively.
The values of the coverage objective function $\mathcal{H}$ and the effectiveness $\mathcal{E}$ were recorded at each time-step over $10$ simulation runs. The duration of each run is set to $10~\si{s}$ with time-step $\Delta t=0.1~\si{s}$. Results are presented in Fig.~\ref{fig:conv_rate} as mean values with standard deviations. As shown in the plots, HMPCC achieves faster convergence for both $\mathcal{H}$ and $\mathcal{E}$. This improved efficiency stems from the fact that the control input is not directly tied to the distance from the centroid. Instead, planning to minimize the cost function~\eqref{eq:coverage_cost} enables robots to move more efficiently toward regions of higher density, resulting in accelerated convergence.

\subsection{Non-Convex Environment without Humans}
\begin{figure}[t]
    \centering
    \subfigure[Baseline]{\includegraphics[width=0.45\linewidth]{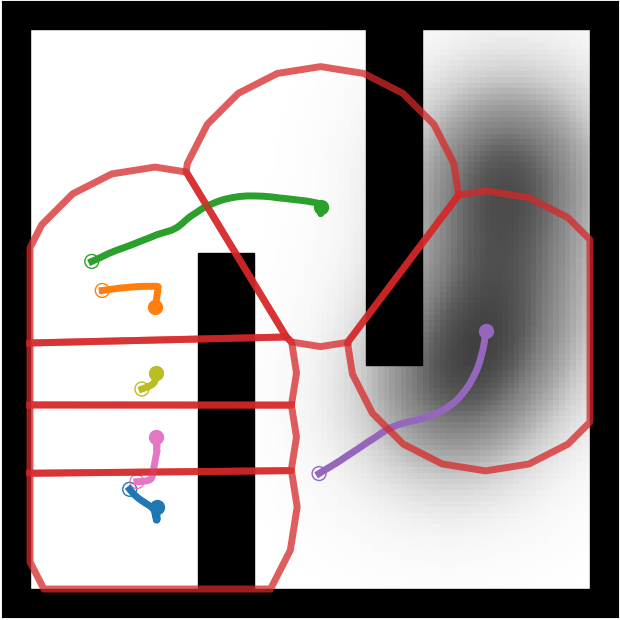}}\quad
    \subfigure[HMPCC]{\includegraphics[width=0.45\linewidth]{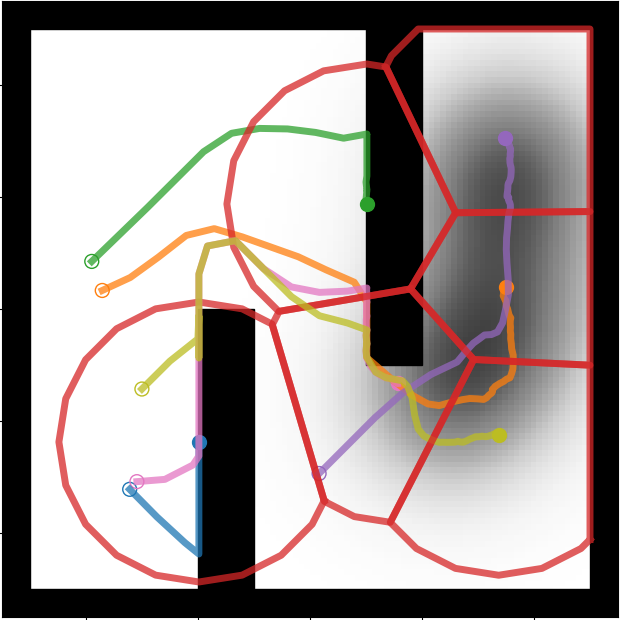}}
    \caption{Comparison in a significant simulation run between limited-range coverage control~\cite{pratissoli2022coverage} with repulsive collision avoidance~\cite{franco2015persistent} and our MPC-based approach. Robots trajectories are depicted with different colors, their starting and final positions are indicated by empty and filled dots, respectively. (a) The baseline is prone to local minima, while (b) HMPCC allows robots to plan obstacle-avoiding trajectories to reach areas of interest.}
    \label{fig:maze_trajectories}
\end{figure}
\begin{figure}[t]
    \centering
    \subfigure[Coverage Function]{\includegraphics[width=0.45\linewidth]{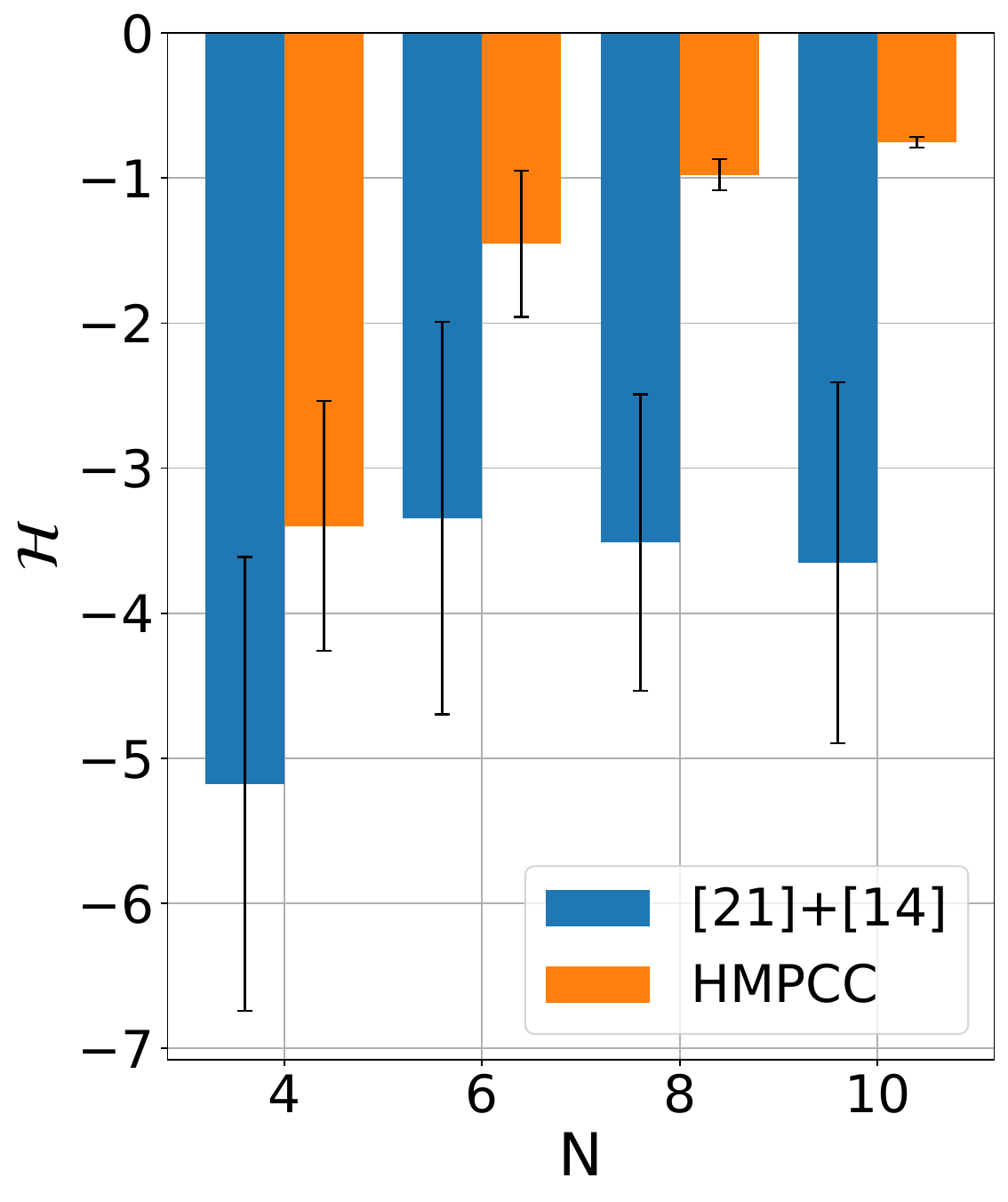}}\quad
    \subfigure[Coverage Effectiveness]{\includegraphics[width=0.45\linewidth]{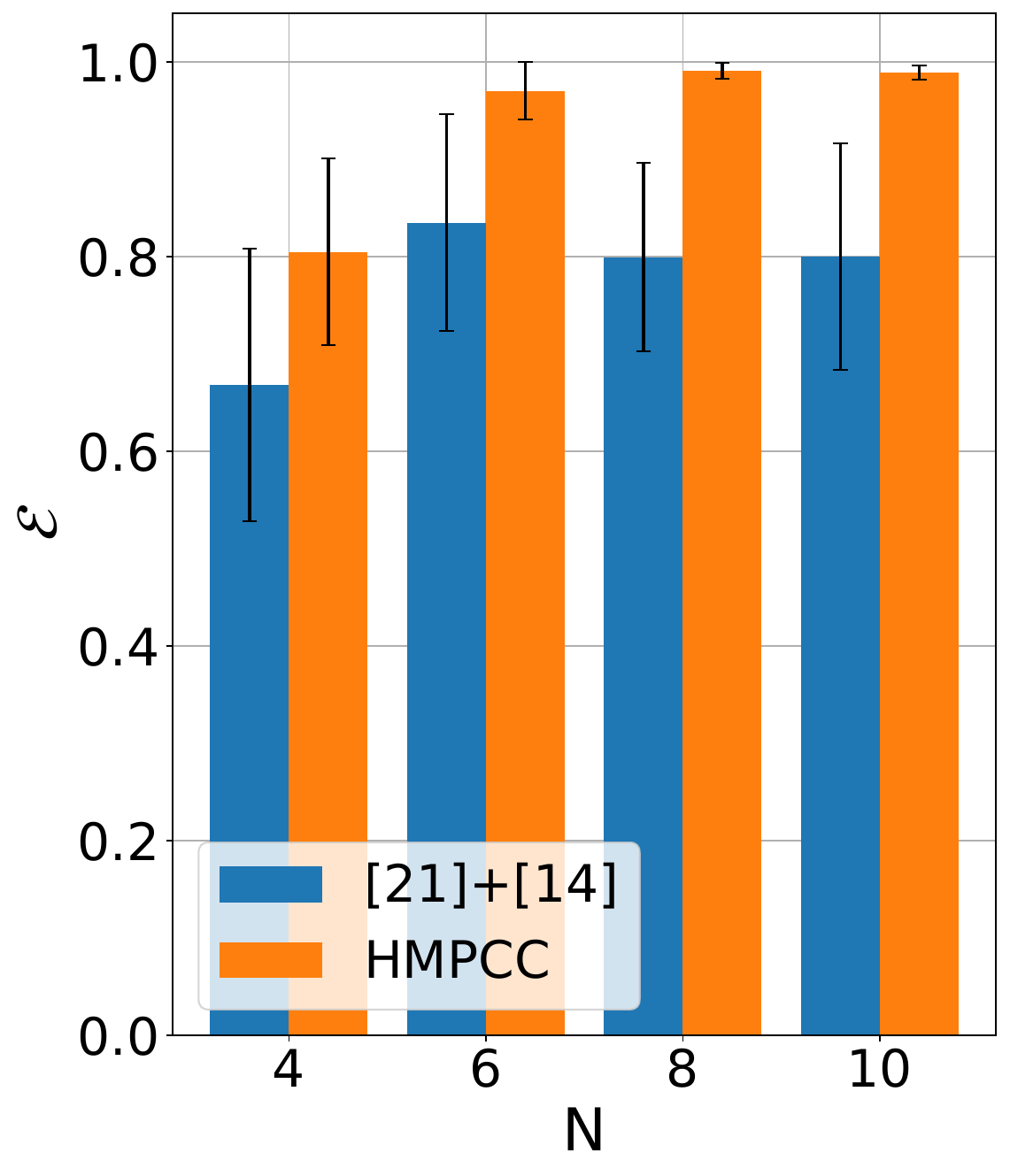}}
    \caption{Evaluation in non-convex environments with increasing number of robots. Our solution outperforms the baseline both in terms of $\mathcal{H}$ (a) and $\mathcal{E}$ (b).}
    \label{fig:nonconv_eff}
\end{figure}
\begin{figure*}[t]
    \centering
    \subfigure[$t=0.0~\si{s}$]{\includegraphics[width=0.18\linewidth]{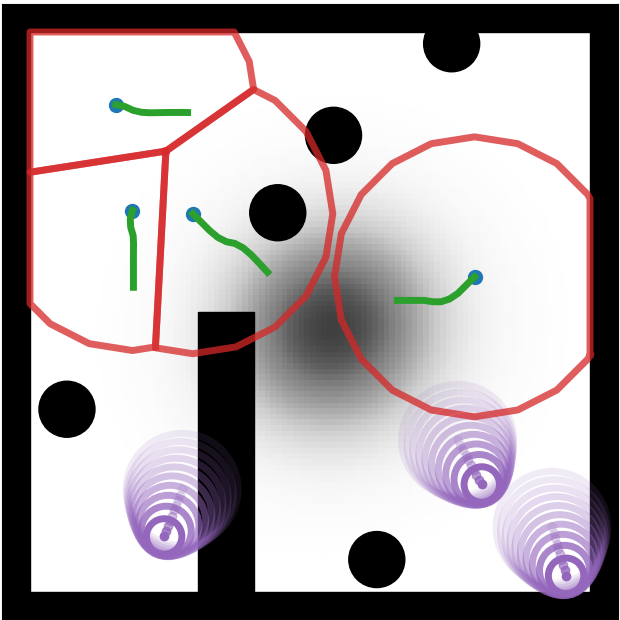}}
    \subfigure[$t=3.0~\si{s}$]{\includegraphics[width=0.18\linewidth]{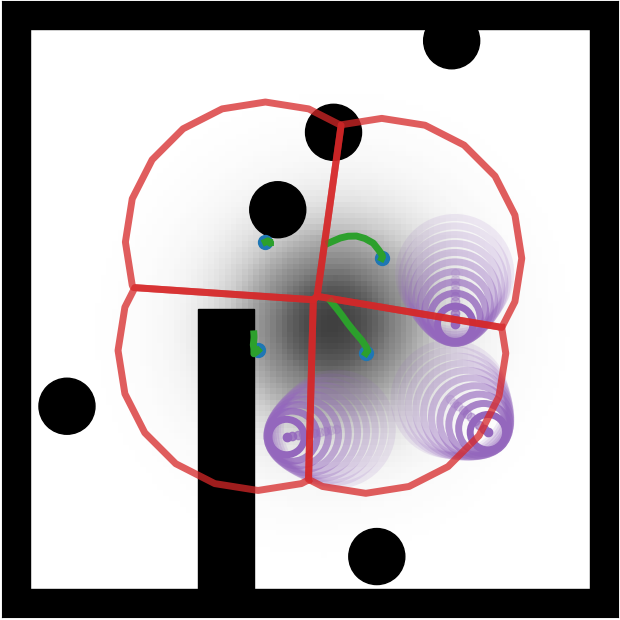}}
    \subfigure[$t=6.5~\si{s}$]{\includegraphics[width=0.18\linewidth]{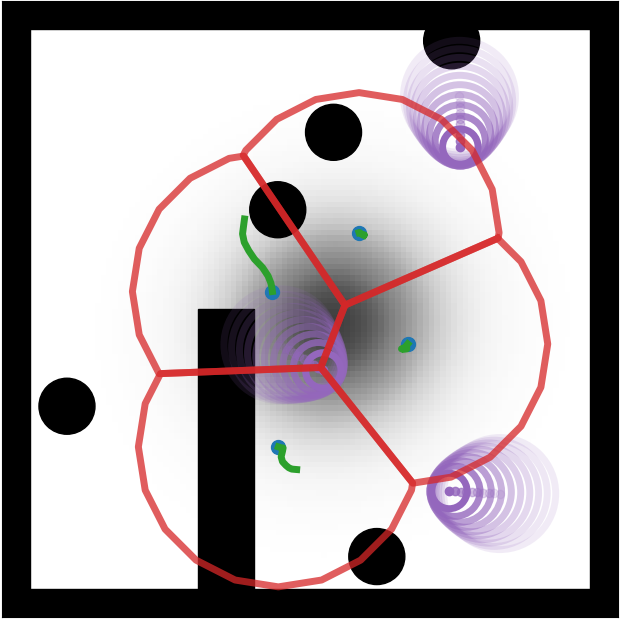}}
    \subfigure[$t=9.0~\si{s}$]{\includegraphics[width=0.18\linewidth]{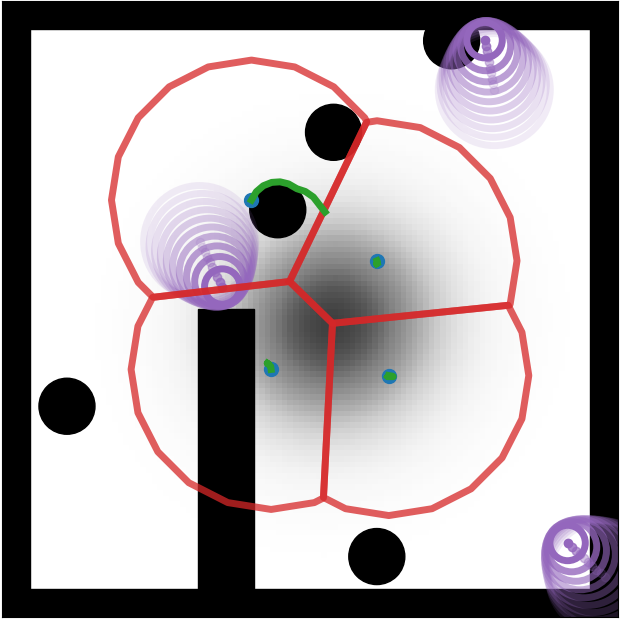}}
    \subfigure[$t=11.0~\si{s}$]{\includegraphics[width=0.18\linewidth]{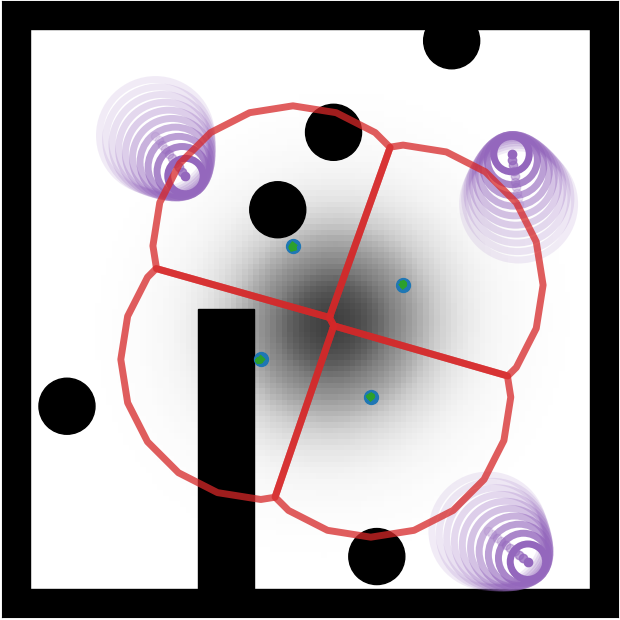}}
    \caption{Snapshots of a representative task execution involving a team of $4$ unicycle robots and $3$ humans. Robots are shown as blue dots, their planned trajectories as green curves, and humans in purple. The uncertainty associated with human motion is visualized as expanding circles along the prediction horizon. (a) Robots start in random positions, (b) reach the area of interest, (c) temporary disperse to make room for approaching humans, (d) avoid obstacles in the environment, and (e) finally return to the area of interest.}
    \label{fig:hr_snaps}
\end{figure*}
As previously noted, a major limitation of traditional coverage control approaches is their reliance on the assumption of convex environments. When obstacles are present, the addition of a separate obstacle-avoidance mechanism often causes robots to become trapped in local minima, preventing them from reaching the desired coverage areas. To test how HMPCC behaves in those scenarios, we ran another set of simulations, increasing the duration of each run to $15~\si{s}$. We employ teams with $N = \{4, 6, 8, 10 \}$ robots following single integrator dynamics $p_i(t+1)=p_i(t)+u_i(t)\Delta t$.\\  Fig.~\ref{fig:maze_trajectories} illustrates the difference between the baseline hybrid approach, combining~\cite{pratissoli2022coverage} and~\cite{franco2015persistent}, shown in Fig.~\ref{fig:maze_trajectories}a, and our proposed method in Fig.~\ref{fig:maze_trajectories}b. It is evident that the baseline controller gets stuck when direct paths are obstructed, whereas our constrained optimization-based approach enables the robots to navigate around obstacles and reach the target areas. As a result, HMPCC achieves superior final performance, as illustrated in Fig.~\ref{fig:nonconv_eff}a for the coverage function $\mathcal{H}$, and Fig.~\ref{fig:nonconv_eff}b for the effectiveness $\mathcal{E}$.

\subsection{Non-Convex Environment with Humans}
Finally, we introduced humans into the environment to evaluate how the proposed method benefits from predicting their trajectories. Similarly to the previous approach, we ran sets of $10$ simulations each to compare HMPCC with the baseline without prediction. We set the number of robots to $N = 6$ and humans to $N_h = \{3, 6, 9 \}$, and the risk factor for the chance constraint~\eqref{eq:chance_constr} to $\alpha = 0.1$. Unicycle robots were employed in this evaluation phase, with a motion law as follows:
\begin{align}
    x_i(t+1) = x_i(t) + \begin{bmatrix}
        \cos\theta_i & 0 \\
        \sin\theta_i & 0 \\
        0 & 1
    \end{bmatrix} \begin{bmatrix}
        v_i \\ \omega_i
    \end{bmatrix} \Delta t,
\end{align}
where the state $x_i = [p_i; \theta_i] \in R^3$ indicates position and orientation of the $i$-th robot, while $v_i, \omega_i \in \mathbb{R}$ are its linear and angular velocity, respectively. Instead, we model human motion as:
\begin{align}
    \human_j(t+1) &= \human_j(t) + \begin{bmatrix}
        \cos \vartheta_j \\
        \sin \vartheta_j
    \end{bmatrix} \nu_j \Delta t \\
    \vartheta_j(t+1) &= \vartheta_j(t)+\psi_j \Delta t
\end{align}
with $\nu_j \in \mathbb{R}_{\geq 0}$ and $\psi_j \sim \mathcal{N}(0, \sigma)$ being a constant linear velocity and a random angular velocity, respectively. Human motion prediction was modeled using a constant-velocity assumption, resulting in the probabilistic trajectory defined in~\eqref{eq:human_pred}, with the following mean and covariance update:
\begin{align}
    \mu_j^k = \mu_j^{k-1} + \begin{bmatrix}
        \cos \vartheta_j^{k-1} \\
        \sin \vartheta_j^{k-1}
    \end{bmatrix} \hat{\nu}_j, &&
\Sigma_j^k = \Sigma_j^{k-1}+Q,
\end{align}
\renewcommand{\arraystretch}{1.2}
\begin{table}[]
\centering
\caption{Results in human-filled scenarios.}
\begin{tabular}{|c|c|c|cc|}
\hline
$N_h$ & $\mathcal{E}$     & $\mathcal{H}$      & \multicolumn{2}{c|}{Success rate}          \\ \hline
  & \textbf{HMPCC} & \textbf{HMPCC}  & \multicolumn{1}{c|}{\textbf{Baseline}} & \textbf{HMPCC} \\ \hline
3 & 0.952 & -1.110 & \multicolumn{1}{c|}{$60\%$}       & $100\%$   \\ \hline
6 & 0.942 & -1.613 & \multicolumn{1}{c|}{$20\%$}       & $100\%$   \\ \hline
9 & 0.875 & -1.993 & \multicolumn{1}{c|}{$40\%$}       & $80\%$    \\ \hline
\end{tabular}
\label{tab:results}
\end{table}
where $Q \in \mathbb{R}^{2 \times 2}$ is a noise matrix that accounts for growing uncertainty over the prediction horizon, and $\hat{\nu}_j \in \mathbb{R}_{\geq 0}$ is the $j$-th human velocity predicted from the last $M$ steps:
\begin{equation}\label{eq:vel_estimate}
    \hat{\nu}_j^t = \frac{\|\human_j^t - \human_j^{t-M+1}\|}{(M-1) \Delta t}
\end{equation}
In this setting, we also analyzed the success rate across multiple simulation runs to compare our method with the baseline. Execution was considered unsuccessful if at least one robot collided with an obstacle or a human, or exited the boundaries of the environment. A summary of the results is provided in Table~\ref{tab:results}. It is important to note that the values of $\mathcal{E}$ and $\mathcal{H}$ are captured at the end of each simulation, which is fixed to $15~\si{s}$. For this reason, robots may have temporarily moved away from high-density areas to avoid collisions with humans. Moreover, we did not report the metrics for the baseline method due to an insufficient number of successful executions, rendering the results statistically insignificant. Additionally, Fig.~\ref{fig:hr_snaps} presents snapshots of a representative task execution, illustrating how the robots plan trajectories that avoid both obstacles and humans, and temporarily relocate from areas of interest to make room for approaching humans.\\
Finally, we tested HMPCC in a physical simulation using Gazebo and TurtleBot3 unicycle robots to assess the computational feasibility of our approach. We prepared a simulation with $3$ robots and $3$ humans in a $20 \times 20~\si{m^2}$ environment with obstacles, as illustrated in Fig.~\ref{fig:gazebo}. The simulation was run on a PC equipped with an \textit{Intel Core i7} CPU, an \textit{NVIDIA GeForce RTX 4060} GPU, and $16~\si{GB}$ of RAM, using ROS for the implementation. We set the control frequency of the robots to $10~\si{Hz}$. A representative run of the simulation is reported in the accompanying video.

\subsection*{Summary of the Results}
In summary, the presented results demonstrate several advantages of our method over traditional coverage control approaches:
\begin{itemize}
    \item \textbf{faster convergence} -- Even in simple environments, an optimization-based approach appears to be more efficient in reaching target areas;
    \item \textbf{robustness to complexity} -- By encoding dynamics and obstacle avoidance as constraints, our method handles non-convex environments and varied motion models, reducing susceptibility to local minima; and
    \item \textbf{human-aware deployment} -- The ability to predict the area where humans are more likely to be in the future fits with the planning strategy to efficiently avoid them.
\end{itemize}

\section{CONCLUSIONS}\label{sec:conclusion}
% \mc{Advantages: Faster convergence without obstacles, higher performance in non-convex envs, deploy in human-filled envs.\\
% Future works: More complex human prediction (FlowChain, SocialLSTM, ...), real-world experiments.}
% In this paper, we presented HMPCC, a human-aware  optimization-based coverage control method. By encoding collision avoidance and motion dynamics as constraints within an optimization problem, our solution overcomes usual limitations of traditional approaches. Furthermore, the integration of a human trajectory prediction module allows safe navigation, even when humans intensions are not known. We evaluated the proposed method against a traditional baseline, showing how HMPCC achieves better results in terms of safety and coverage efficiency. Future works will focus on integrating a more realistic prediction model into our framework (e.g., FlowChain~\cite{maeda2023fast}), and deploying real-world experiments to assess the computational feasibility of our solution.
In this paper, we introduced HMPCC, a human-aware optimization-based coverage control strategy. By formulating collision avoidance and robot dynamics as constraints, HMPCC overcomes key limitations of traditional methods. The integration of human trajectory prediction ensures safe operation even when human intentions are unknown. Experimental results show that HMPCC outperforms a traditional baseline in both safety and coverage efficiency. Future work will focus on integrating more advanced prediction models (e.g., FlowChain~\cite{maeda2023fast}), exploring human-robot collaborative scenarios, and validating the approach through real-world experiments to assess the computational feasibility of our solution.

% \addtolength{\textheight}{-12cm}   % This command serves to balance the column lengths
                                  % on the last page of the document manually. It shortens
                                  % the textheight of the last page by a suitable amount.
                                  % This command does not take effect until the next page
                                  % so it should come on the page before the last. Make
                                  % sure that you do not shorten the textheight too much.

%%%%%%%%%%%%%%%%%%%%%%%%%%%%%%%%%%%%%%%%%%%%%%%%%%%%%%%%%%%%%%%%%%%%%%%%%%%%%%%%

%%%%%%%%%%%%%%%%%%%%%%%%%%%%%%%%%%%%%%%%%%%%%%%%%%%%%%%%%%%%%%%%%%%%%%%%%%%%%%%%

% \section*{ACKNOWLEDGMENT}

% The preferred spelling of the word ÒacknowledgmentÓ in America is without an ÒeÓ after the ÒgÓ. Avoid the stilted expression, ÒOne of us (R. B. G.) thanks . . .Ó  Instead, try ÒR. B. G. thanksÓ. Put sponsor acknowledgments in the unnumbered footnote on the first page.

%%%%%%%%%%%%%%%%%%%%%%%%%%%%%%%%%%%%%%%%%%%%%%%%%%%%%%%%%%%%%%%%%%%%%%%%%%%%%%%%

\bibliographystyle{IEEEtran}
\bibliography{biblio}

@inproceedings{pratissoli2022coverage,
  title={On coverage control for limited range multi-robot systems},
  author={Pratissoli, Federico and Capelli, Beatrice and Sabattini, Lorenzo},
  booktitle={2022 IEEE/RSJ International Conference on Intelligent Robots and Systems (IROS)},
  pages={9957--9963},
  year={2022},
  organization={IEEE}
}

@article{cortes2004coverage,
  title={Coverage control for mobile sensing networks},
  author={Cortes, Jorge and Martinez, Sonia and Karatas, Timur and Bullo, Francesco},
  journal={IEEE Transactions on robotics and Automation},
  volume={20},
  number={2},
  pages={243--255},
  year={2004},
  publisher={IEEE}
}

@article{lloyd1982least,
  title={Least squares quantization in PCM},
  author={Lloyd, Stuart},
  journal={IEEE transactions on information theory},
  volume={28},
  number={2},
  pages={129--137},
  year={1982},
  publisher={IEEE}
}

@article{carron2020model,
  title={Model predictive coverage control},
  author={Carron, Andrea and Zeilinger, Melanie N},
  journal={IFAC-PapersOnLine},
  volume={53},
  number={2},
  pages={6107--6112},
  year={2020},
  publisher={Elsevier}
}

@article{rickenbach2024active,
  title={Active learning-based model predictive coverage control},
  author={Rickenbach, Rahel and K{\"o}hler, Johannes and Scampicchio, Anna and Zeilinger, Melanie N and Carron, Andrea},
  journal={IEEE Transactions on Automatic Control},
  volume={69},
  number={9},
  pages={5931--5946},
  year={2024},
  publisher={IEEE}
}

@inproceedings{bhattacharya2013distributed,
  title={Distributed coverage and exploration in unknown non-convex environments},
  author={Bhattacharya, Subhrajit and Michael, Nathan and Kumar, Vijay},
  booktitle={Distributed Autonomous Robotic Systems: The 10th International Symposium},
  pages={61--75},
  organization={Springer}
}

@inproceedings{breitenmoser2010voronoi,
  title={Voronoi coverage of non-convex environments with a group of networked robots},
  author={Breitenmoser, Andreas and Schwager, Mac and Metzger, Jean-Claude and Siegwart, Roland and Rus, Daniela},
  booktitle={2010 IEEE international conference on robotics and automation},
  pages={4982--4989},
  year={2010},
  organization={IEEE}
}

@article{franco2015persistent,
  title={Persistent coverage control for a team of agents with collision avoidance},
  author={Franco, Carlos and Stipanovi{\'c}, Du{\v{s}}an M and L{\'o}pez-Nicol{\'a}s, Gonzalo and Sag{\"u}{\'e}s, Carlos and Llorente, Sergio},
  journal={European Journal of Control},
  volume={22},
  pages={30--45},
  year={2015},
  publisher={Elsevier}
}

@inproceedings{breitenmoser2016combining,
  title={On combining multi-robot coverage and reciprocal collision avoidance},
  author={Breitenmoser, Andreas and Martinoli, Alcherio},
  booktitle={Distributed Autonomous Robotic Systems: The 12th International Symposium},
  pages={49--64},
  year={2016},
  organization={Springer}
}

@article{Claes2018,
  author    = {Daniel Claes and Karl Tuyls},
  title     = {Multi robot collision avoidance in a shared workspace},
  journal   = {Autonomous Robots},
  volume    = {42},
  number    = {8},
  pages     = {1749--1770},
  year      = {2018},
  publisher = {Springer},
  doi       = {10.1007/s10514-018-9726-5},
  url       = {},
  issn      = {1573-7527}
}

@INPROCEEDINGS{Talebpour,
  author={Talebpour, Zeynab and Martinoli, Alcherio},
  booktitle={2018 IEEE International Conference on Robotics and Automation (ICRA)}, 
  title={Multi-Robot Coordination in Dynamic Environments Shared with Humans}, 
  year={2018},
  volume={},
  number={},
  pages={4593-4600},
  keywords={Robot kinematics;Task analysis;Navigation;Planning;Resource management;Measurement},
  doi={10.1109/ICRA.2018.8460978}}

@article{du2011probabilistic,
  title={Probabilistic collision checking with chance constraints},
  author={Du Toit, Noel E and Burdick, Joel W},
  journal={IEEE Transactions on Robotics},
  volume={27},
  number={4},
  pages={809--815},
  year={2011},
  publisher={IEEE}
}

@inproceedings{gupta2018social,
  title={Social gan: Socially acceptable trajectories with generative adversarial networks},
  author={Gupta, Agrim and Johnson, Justin and Fei-Fei, Li and Savarese, Silvio and Alahi, Alexandre},
  booktitle={Proceedings of the IEEE conference on computer vision and pattern recognition},
  pages={2255--2264},
  year={2018}
}

@inproceedings{alahi2016social,
  title={Social lstm: Human trajectory prediction in crowded spaces},
  author={Alahi, Alexandre and Goel, Kratarth and Ramanathan, Vignesh and Robicquet, Alexandre and Fei-Fei, Li and Savarese, Silvio},
  booktitle={Proceedings of the IEEE conference on computer vision and pattern recognition},
  pages={961--971},
  year={2016}
}

@inproceedings{maeda2023fast,
  title={Fast inference and update of probabilistic density estimation on trajectory prediction},
  author={Maeda, Takahiro and Ukita, Norimichi},
  booktitle={Proceedings of the IEEE/CVF International Conference on Computer Vision},
  pages={9795--9805},
  year={2023}
}

@inproceedings{salzmann2020trajectron++,
  title={Trajectron++: Dynamically-feasible trajectory forecasting with heterogeneous data},
  author={Salzmann, Tim and Ivanovic, Boris and Chakravarty, Punarjay and Pavone, Marco},
  booktitle={Computer Vision--ECCV 2020: 16th European Conference, Glasgow, UK, August 23--28, 2020, Proceedings, Part XVIII 16},
  pages={683--700},
  year={2020},
  organization={Springer}
}

@article{rudenko,
author = {Andrey Rudenko and Luigi Palmieri and Michael Herman and Kris M Kitani and Dariu M Gavrila and Kai O Arras},
title ={Human motion trajectory prediction: a survey},
journal = {The International Journal of Robotics Research},
volume = {39},
number = {8},
pages = {895-935},
year = {2020},
doi = {10.1177/0278364920917446}
}

@inproceedings{bae2024singulartrajectory,
  title={Singulartrajectory: Universal trajectory predictor using diffusion model},
  author={Bae, Inhwan and Park, Young-Jae and Jeon, Hae-Gon},
  booktitle={Proceedings of the IEEE/CVF Conference on Computer Vision and Pattern Recognition},
  pages={17890--17901},
  year={2024}
}

@article{SFM,
   title={Social force model for pedestrian dynamics},
   volume={51},
   ISSN={1095-3787},
   url={},
   DOI={10.1103/physreve.51.4282},
   number={5},
   journal={Physical Review E},
   publisher={American Physical Society (APS)},
   author={Helbing, Dirk and Molnár, Péter},
   year={1995},
   month=may, pages={4282–4286} }

@INPROCEEDINGS{RVO,
  author={van den Berg, Jur and Ming Lin and Manocha, Dinesh},
  booktitle={2008 IEEE International Conference on Robotics and Automation}, 
  title={Reciprocal Velocity Obstacles for real-time multi-agent navigation}, 
  year={2008},
  pages={1928-1935},
  doi={10.1109/ROBOT.2008.4543489}}

@inproceedings{ORCA,
title = "Reciprocal n-body collision avoidance",
author = "{Van Den Berg}, Jur and Guy, {Stephen J.} and Ming Lin and Dinesh Manocha",
year = "2011",
doi = "10.1007/978-3-642-19457-3_1",
isbn = "9783642194566",
series = "Springer Tracts in Advanced Robotics",
pages = "3--19",
booktitle = "Robotics Research - The 14th International Symposium ISRR",
edition = "STAR",
}

@article{liu2024distributed,
  title={Distributed coverage control of constrained constant-speed unicycle multi-agent systems},
  author={Liu, Qingchen and Zhang, Zengjie and Le, Nhan Khanh and Qin, Jiahu and Liu, Fangzhou and Hirche, Sandra},
  journal={IEEE Transactions on Automation Science and Engineering},
  year={2024},
  publisher={IEEE}
}

@article{kamon1998tangentbug,
  title={Tangentbug: A range-sensor-based navigation algorithm},
  author={Kamon, Ishay and Rimon, Elon and Rivlin, Ehud},
  journal={The International Journal of Robotics Research},
  volume={17},
  number={9},
  pages={934--953},
  year={1998},
  publisher={Sage Publications Sage CA: Thousand Oaks, CA}
}

\end{document}